\newcommand{\OT}{\mathrm{OT}}
\newcommand{\simplex}{\Delta}
\title{A note on the relations between mixture models, maximum-likelihood and entropic optimal transport}
\author{Titouan Vayer \\
  Inria, ENS de Lyon, CNRS, UCBL, LIP, UMR 5668. \\
  \texttt{titouan.vayer@inria.fr} \\
  \and Etienne Lasalle \\
  ENS de Lyon, CNRS, UCBL, Inria, LIP, UMR 5668. \\
  \texttt{etienne.lasalle@ens-lyon.fr}}
\begin{document}
\maketitle

\begin{abstract}
This note aims to demonstrate that performing maximum-likelihood estimation for a mixture model is equivalent to minimizing over the parameters an optimal transport problem with entropic regularization. The objective is pedagogical: we seek to present this already known result in a concise and hopefully simple manner. We give an illustration with Gaussian mixture models by showing that the standard EM algorithm is a specific block-coordinate descent on an optimal transport loss.
\end{abstract}

\paragraph{Notations.} Any vector $\xbf \in \R^{K}$ is treated as column matrix. The discrete probability simplex with $K$ bins is noted as $\simplex_K = \{\abf \in \R^{K}_{+}: \sum_{j=1}^{K} a_j = 1\}$. The vector of $K$ ones is denoted as $\one_K$. $\delta_x$ is the dirac mass supported at $x$. For simplicity in this note, we will deliberately remain vague in certain places regarding ‘‘edge cases'' like $0\log 0$. To be entirely rigorous, we would need to reason on the supports of the matrices and define objects that can take $\infty$ as a value.

\section{Introduction and preliminaries on optimal transport}

The relations between maximum-likelihood and optimal transport (OT) have already been discussed in multiple works \citep{rigollet2018entropic, mena2020sinkhorn,diebold2024unifiedframeworkhardsoft}. The purpose of this brief note is to provide the key tools used to establish these connections.
The primary aim is pedagogical: we will focus on the (discrete) mixtures case, adopting a ‘‘computational OT'' perspective.
Hopefully, readers will find this exercise insightful.
Our analysis will largely rely on the approach described in \citet{rigollet2018entropic}, though adapted to a different formalism and applied to a slightly different problem (mixture estimation rather than Gaussian deconvolution).

To fix the notations, we first briefly recall the fundamentals of (entropic) discrete optimal transport (OT) and readers seeking more details can refer to \citet{peyre2019computational}.
Let $\Cbf \in \R^{n \times K}$ be a cost matrix, representing, for example, the distances between points from two distributions. Let $\abf \in \simplex_n, \bbf \in \simplex_K$ be two probability vectors, encoding for instance the quantities to be transported and supplied, respectively.
The goal of OT is to determine a way to move these quantities while respecting supply constraints, such that the total transportation cost is minimized (defined by $\Cbf$).
This is formalized through the set of couplings, or transport plans, with marginals $\abf, \bbf$, which is defined by
\begin{equation}
U(\abf, \bbf) \triangleq \{\Pbf \in \R_{+}^{n \times K}:  \Pbf \one_K = \abf, \Pbf^\top \one_n = \bbf\}\,.
\end{equation}
When $\Pbf \in U(\abf, \bbf), P_{ij} \in [0,1]$ represents the of probability mass transported from the $i$-th point to the $j$-th.
When we only care about transporting the mass of the input measure without constraint on the supply we can consider \emph{semi-relaxed} transport plans as
\begin{equation}
U_{K}(\abf) \triangleq \{\Pbf \in \R_{+}^{n \times K}:  \Pbf \one_K = \abf\}\,.
\end{equation}
Given a transport plan $\Pbf$, the total cost of displacement is given by $\langle \Cbf, \Pbf \rangle$ and the goal of standard OT is to find the transport plan that minimizes this cost.

Entropic regularization, introduced in \citet{cuturi2013sinkhorn}, was proposed to accelerate the computation of the optimal transport plan.
It relies on the Kullback-Leibler divergence between two matrices $\Pbf, \Qbf \in \R_+^{n \times K}$ which is defined as
\begin{equation}
    \KL(\Pbf| \Qbf) \triangleq \sum_{i=1}^n\sum_{j=1}^K P_{ij} \log \frac{P_{ij}}{Q_{ij}}\,.
\end{equation}
The entropic-regularized optimal transport problem (EOT) is expressed as
\begin{equation}
\label{eq:entropic_reg_problem}
    \OT_{\varepsilon}(\abf, \bbf, \Cbf) \triangleq \min_{\Pbf \in U(\abf, \bbf)} \, \langle \Cbf, \Pbf \rangle + \varepsilon \KL(\Pbf|\abf \bbf^\top)\,,
\end{equation}
In \eqref{eq:entropic_reg_problem}, the goal is to find the transport plan that minimizes a trade-off between the transport cost and a measure of ‘‘distance'' to the uniform coupling $\abf \bbf^\top$, which distributes every source point to every target point uniformly.
When $\Pbf$ is a semi-relaxed transport plan the problem \eqref{eq:entropic_reg_problem} will be called a semi-relaxed entropic OT problem.

\section{Maximum-likelihood for mixture models is minimization of EOT}

In this note, we consider a mixture model as described below.
\begin{definition}[(Discrete) mixture model]
\label{def:discrete_mm}
The generative process of a discrete mixture model consists in
\begin{itemize}
\item[$\bullet$] $Y \sim P_{Y}$ where $P_Y = \sum_{j=1}^{K} \pi^\star_j \delta_j$ with $\pibf^\star \in \simplex_K$ represents the discrete distribution on $K$ labels/classes. In other words, $P_Y$ is the distribution of the ‘‘latent variables''.
\item[$\bullet$] $X|y=j \sim P_{X|Y}(\cdot|j, \thetabf^\star)$, where $\thetabf^\star \in \Theta$ and $P_{X|Y}$ is the parametrized distribution of the data given the label.
\end{itemize}
\end{definition}
We note $P_{X,Y}$ the corresponding joint distribution. A simple example of this generative process is the Gaussian mixture model where the parametrized distribution has density $P_{X|Y}(\xbf|j, \thetabf^\star)\propto \exp(-\frac{1}{2}(\xbf-\mubf_j^\star)^\top {\Sigmabf^{\star}}^{-1}(\xbf-\mubf_j^\star))$ where $\mubf^{\star}_j \in \R^d$ is the true mean associated to the $j$-th class and $\Sigmabf^{\star} \succ 0$ the true covariance (assumed to be identical for each class).
In this case $\thetabf^\star = (\mubf^\star_1, \cdots, \mubf^\star_{K}, \Sigmabf^\star)$.

Now suppose that we observe some samples $\xbf_1, \cdots, \xbf_n \sim P_X$ i.i.d. where $P_X$ is the distribution of the data, according to the generative model above.
The goal of maximum-likelihood estimation is to infer the parameters $\pibf^\star, \thetabf^\star$ from these observations.
By independence, the negative log-likelihood for a parameter $\betabf = (\thetabf, \pibf)$ writes
\begin{equation}
\label{eq:first_shot_likelihood}
\begin{split}
\Lcal(\betabf) &= -\sum_{i=1}^{n} \log P_{X}(\xbf_i| \betabf) \\
&\stackrel{\star}{=} -\sum_{i=1}^{n} \log \left(\sum_{j=1}^{K} P_{X,Y}(\xbf_i, j |\betabf)\right)\stackrel{\star \star}{=} -\sum_{i=1}^{n} \log \left(\sum_{j=1}^{K} \pi_j P_{X|Y}(\xbf_i|j, \thetabf) \right) \\
&= -\sum_{i=1}^{n} \log \left(\sum_{j=1}^{K} \pi_j \exp(\log(P_{X|Y}(\xbf_i|j, \thetabf)) \right) \,.
\end{split}
\end{equation}
In $\star$ we used the law of total probability and in $\star \star$ we used the Bayes' formula. In the last line we use a (at first glance) stupid reparametrization.

We will prove three facts: first, the negative log-likelihood \eqref{eq:first_shot_likelihood} can be rewritten as a certain semi-relaxed entropic OT problem; second, there is an entropic OT problem that is an upper bound for the negative log-likelihood; and third, minimizing $\Lcal(\betabf)$ with respect to $\pibf$ results in equality with this upper bound.

The key result to make the connections between OT and log-likelihood is to rewrite the ‘‘logsumexp'' term as a minimization problem over the probability simplex.
 This is next in the following lemma, which is sometimes referred to as the Gibbs variational principle or the dual formulation of the KL divergence.
\begin{restatable}{lemma}{dualkl}
\label{lemma:dual_kl}
Let $\pi_1, \cdots, \pi_K$ be positive real numbers and $h_1, \cdots, h_K \in \R$. Then
\begin{equation*}
\log \left(\sum_{j=1}^{K} \pi_j \exp(h_j)\right) = \max_{\pbf \in \simplex_K} \ \sum_{j=1}^{K} h_j p_j - \sum_{j=1}^{K} p_j \log(\frac{p_j}{\pi_j})\,.
\end{equation*}
The optimal solution is given by $\forall k \in \integ{K}, \ p_k = \frac{\pi_k\exp(h_k)}{\sum_{j=1}^{K}\pi_j\exp(h_j)}$.
\end{restatable}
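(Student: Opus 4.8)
The plan is to reduce the identity to the nonnegativity of the Kullback--Leibler divergence, which lets me avoid any discussion of which inequality constraints are active. Write $Z \triangleq \sum_{j=1}^{K} \pi_j \exp(h_j)$, a positive real, and define $\pbf^\star \in \R^K$ by $p^\star_k \triangleq \pi_k \exp(h_k)/Z$ for $k \in \integ{K}$; since the $\pi_j$ are positive, $\pbf^\star \in \simplex_K$. The first step is a purely algebraic rewriting of the objective: for any $\pbf \in \simplex_K$,
\[
\sum_{j=1}^{K} h_j p_j - \sum_{j=1}^{K} p_j \log\frac{p_j}{\pi_j}
= \sum_{j=1}^{K} p_j \log\frac{\pi_j \exp(h_j)}{p_j}
= \sum_{j=1}^{K} p_j \log\frac{Z\,p^\star_j}{p_j}
= \log Z - \KL(\pbf \,|\, \pbf^\star),
\]
where the last equality uses $\sum_j p_j = 1$ to extract the constant $\log Z$.

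The second step invokes the Gibbs inequality $\KL(\pbf\,|\,\pbf^\star) \ge 0$, valid for all $\pbf \in \simplex_K$ with equality if and only if $\pbf = \pbf^\star$; if one wants this self-contained it follows from $\log t \le t-1$ (equality iff $t=1$) applied to $t = p^\star_j/p_j$, or from strict concavity of $\log$ via Jensen. Plugging this into the display shows that the objective is $\le \log Z$ everywhere on $\simplex_K$ and equals $\log Z$ precisely at $\pbf^\star$, hence the maximum is attained, equals $\log\big(\sum_j \pi_j \exp(h_j)\big)$, and the (unique) maximizer is the claimed vector.

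I could instead argue by Lagrange multipliers: the objective is concave and $\simplex_K$ is compact, so a maximizer exists; the stationarity condition for the constraint $\sum_j p_j = 1$ reads $h_j - \log(p_j/\pi_j) - 1 = \lambda$, i.e. $p_j \propto \pi_j \exp(h_j)$, while the gradient diverging like $-\log p_j$ near the boundary rules out the positivity constraints being active, and normalizing then substituting back gives the value. I would still prefer the KL-based proof, as it sidesteps any bookkeeping on active constraints and immediately delivers both the value and the maximizer.

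I do not expect a genuine obstacle here. The only minor point, consistent with the note's stated convention on edge cases, is coordinates with $p_j = 0$: reading $p_j \log(p_j/\pi_j)$ as $0$ there (legitimate since $\pi_j > 0$), such terms drop out of all sums above and the computation is unchanged; moreover $\pbf^\star$ has strictly positive entries, so the maximizer is interior to $\simplex_K$.
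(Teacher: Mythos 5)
Your proof is correct, but it takes a different route from the paper's. The paper proves this lemma by the Lagrangian argument you mention only in passing: it writes $L(\pbf,\lambda)=\sum_j h_j p_j - \sum_j p_j\log(p_j/\pi_j)+\lambda(\sum_j p_j -1)$, solves the stationarity condition to get $p_k \propto \pi_k \exp(h_k)$, normalizes, and then substitutes back to compute the optimal value; it does not explicitly address why the nonnegativity constraints are inactive, relying on strict concavity and the implicit fact that the stationary point is interior. Your argument instead rewrites the objective as $\log\bigl(\sum_j \pi_j\exp(h_j)\bigr)-\KL(\pbf\,|\,\pbf^\star)$ and invokes Gibbs' inequality. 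What your route buys: it is a complete verification rather than a derivation --- it proves the upper bound for \emph{every} $\pbf\in\simplex_K$ (including boundary points, which you handle explicitly), establishes attainment without appealing to compactness or constraint qualifications, and gives uniqueness of the maximizer for free from the equality case of $\KL\ge 0$. What the paper's route buys: it shows how one would \emph{find} the maximizer without guessing it in advance, and it reuses the same Lagrangian machinery that appears elsewhere in computational OT. Both are standard and both are sound; yours is arguably the tighter proof of the statement as written.
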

For now, we postpone the proof of this result (see \Cref{sec:postponed_proof}) but we can use it to rewrite the negative log-likelihood. Combining \eqref{eq:first_shot_likelihood} and \Cref{lemma:dual_kl} with $\log(P_{X|Y}(\xbf_i|j, \betabf))$ in the role of $h_j$ we obtain
\begin{equation*}
\label{eq:second_shot_likelihood}
\begin{split}
\Lcal(\betabf) &= -\sum_{i=1}^{n} \left(\max_{\pbf \in \simplex_K}  \sum_{j=1}^{K} \log(P_{X|Y}(\xbf_i|j, \thetabf)) p_j - \sum_{j=1}^{K} p_j \log(\frac{p_j}{\pi_j})\right) \\
&= -\max_{\pbf^{(1)}, \cdots, \pbf^{(n)} \in \simplex_K} \sum_{i=1}^{n} \sum_{j=1}^K \log(P_{X|Y}(\xbf_i|j, \thetabf)) p^{(i)}_j - \sum_{i=1}^{n} \sum_{j=1}^K p^{(i)}_j \log(\frac{p^{(i)}_j}{\pi_j})\,. \\
\end{split}
\end{equation*}
Equivalently,
\begin{equation*}
\frac{1}{n} \Lcal(\betabf) = \min_{\pbf^{(1)}, \cdots, \pbf^{(n)} \in \simplex_K} \sum_{i,j} C_{ij}(\thetabf) \frac{p^{(i)}_j}{n} + \sum_{i,j} \frac{p^{(i)}_j}{n} \log(\frac{p^{(i)}_j/n}{\pi_j/n})\,,
\end{equation*}
where we introduced $C_{ij}(\thetabf) \triangleq - \log(P_{X|Y}(\xbf_i|j, \thetabf))$.
Now suppose that we have solutions $\pbf^{(1)}, \cdots, \pbf^{(n)}$ of the minimization problem above and that we consider $\Pbf = \frac{1}{n}(\pbf^{(1)}, \cdots, \pbf^{(n)})^\top \in \R_{+}^{n \times K}$.
Then obviously $\Pbf \one_K = \frac{1}{n} \one_n$.
Conversely, any matrix $\Pbf \in \R_{+}^{n \times K}$ with $\Pbf \one_K = \frac{1}{n} \one_n$ can be written as $\Pbf = \frac{1}{n}(\pbf^{(1)}, \cdots, \pbf^{(n)})^\top$ for some probability vectors $\pbf^{(1)}, \cdots, \pbf^{(n)} \in \simplex_K$ (which are simply the rows of $\Pbf$).
In other words, this proves that
\begin{equation}
\label{eq:temp}
\begin{split}
\frac{1}{n} \Lcal(\betabf) &= \min_{\begin{smallmatrix}\Pbf \in \R_{+}^{n \times K} \\ \Pbf \one_K = \frac{1}{n} \one_n \end{smallmatrix}} \sum_{i,j} C_{ij}(\thetabf) P_{ij} + \sum_{i,j} P_{ij}\log(\frac{P_{ij}}{\pi_j/n})\\
&= \min_{\Pbf \in U_{K}(\frac{\one_n}{n})} \langle \Cbf(\thetabf), \Pbf \rangle + \KL(\Pbf| \frac{\one_n}{n} \pibf^\top)\,.
\end{split}
\end{equation}
We almost have the desired entropic OT problem, albeit with a ‘‘semi-relaxed'' constraint instead of a standard coupling constraint: if $\Pbf$ were in $U(\frac{\one_n}{n}, \pibf)$ we would be done.

To obtain an entropic OT problem we only need to rewrite a bit the quantity above: we show that minimizing the negative log-likelihood with respect to
$\pibf$ leads to a coupling constraint rather than a semi-relaxed one.
First, for any admissible $\Pbf \in U_{K}(\frac{\one_n}{n})$ of the minimization problem \eqref{eq:temp}, we have (see \Cref{lemma:kl_terms})
\begin{equation}
\label{eq:kl_div_separation}
\KL(\Pbf|\frac{\one_n}{n} \pibf^\top) = \KL\left(\Pbf|\frac{\one_n}{n}(\Pbf^\top \one_n)^\top\right) + \KL(\Pbf^\top \one_n | \pibf)\,.
\end{equation}
This implies that
\begin{equation}
\label{eq:likelihood_final_form}
\frac{1}{n} \Lcal(\betabf) = \min_{\Pbf \in \begin{smallmatrix}U_{K}(\frac{\one_n}{n}) \end{smallmatrix}} \langle \Cbf(\thetabf), \Pbf \rangle + \KL\left(\Pbf|\frac{\one_n}{n}(\Pbf^\top \one_n)^\top\right) + \KL(\Pbf^\top \one_n | \pibf)\,.
\end{equation}
Now consider $\Pbf^\star$ the solution of the entropic OT problem $\min_{\Pbf \in U(\frac{1}{n} \one_n, \pibf)} \langle \Cbf(\thetabf), \Pbf \rangle  + \KL(\Pbf| \frac{\one_n}{n} \pibf^\top)$.
By the constraints, $\Pbf^\star \in U_{K}(\frac{1}{n} \one_n)$ and $\KL({\Pbf^\star}^{\top}\one_n|\pibf) = 0$. Thus, by suboptimality in \eqref{eq:likelihood_final_form} $\frac{1}{n} \Lcal(\betabf) \leq \langle \Cbf(\thetabf), \Pbf^\star \rangle + \KL\left(\Pbf^\star|\frac{\one_n}{n}({\Pbf^\star}^\top \one_n)^\top\right)$.
Hence we first obtain an upper-bound on the negative log-likelihood:
\begin{equation}
\label{eq:neg_likelihood_smaller}
\frac{1}{n} \Lcal(\betabf) \leq \OT_{\varepsilon=1}(\frac{\one_n}{n}, \pibf, \Cbf(\thetabf))\,.
\end{equation}
To obtain an equality we will minimize with respect to $\pibf$. Using that $\min_{\bbf \in \simplex_K} \KL(\abf|\bbf) = \KL(\abf|\abf) = 0$, if we minimize the RHS in \eqref{eq:likelihood_final_form} over $\pibf$ we get,
\begin{equation*}
\label{eq:min_rhs}
\begin{split}
&\min_{\pibf \in \simplex_K} \min_{\Pbf \in \begin{smallmatrix}U_{K}(\frac{\one_n}{n}) \end{smallmatrix}} \langle \Cbf(\thetabf), \Pbf \rangle + \KL\left(\Pbf|\frac{\one_n}{n}(\Pbf^\top \one_n)^\top\right) + \KL(\Pbf^\top \one_n | \pibf) \\
&=\min_{\Pbf \in \begin{smallmatrix}U_{K}(\frac{\one_n}{n}) \end{smallmatrix}} \langle \Cbf(\thetabf), \Pbf \rangle + \KL\left(\Pbf|\frac{\one_n}{n}(\Pbf^\top \one_n)^\top\right) \stackrel{\star}{=} \min_{\pibf \in \simplex_K}\min_{\Pbf \in U(\frac{1}{n} \one_n, \pibf)} \langle \Cbf(\thetabf), \Pbf \rangle  + \KL(\Pbf| \frac{\one_n}{n} \pibf^\top) \\
&= \min_{\pibf \in \simplex_K} \OT_{\varepsilon=1}(\frac{\one_n}{n}, \pibf, \Cbf(\thetabf))\,.
\end{split}
\end{equation*}
For the equality in $\star$ we used that the second marginal in the optimization problem is redundant in the RHS. 
Precisely, if $\Pbf_0$ is a solution of the LHS then $\bbf_0 = \Pbf_0^\top\one_n \in \simplex_K$ thus $\Pbf_0 \in U(\frac{\one_n}{n}, \bbf_0)$ and $\KL\left(\Pbf_0|\frac{\one_n}{n}(\Pbf_0^\top \one_n)^\top\right) = \KL\left(\Pbf_0|\frac{\one_n}{n}\bbf_0^\top\right)$ which implies that LHS $\geq$ RHS. Conversely, if $(\pibf_1, \Pbf_1)$ is a solution of the RHS problem then $\Pbf_1 \in U_{K}(\frac{\one_n}{n})$ and $\pibf_1 = \Pbf_1^\top \one_n$, hence LHS $\leq$ RHS.

In particular for any $\thetabf \in \Theta, \min_{\pibf} \frac{1}{n}\Lcal(\pibf, \thetabf) = \min_{\pibf} \OT_{\varepsilon=1}(\frac{\one_n}{n}, \pibf, \Cbf(\thetabf))$.
This gives the final result written below.

\begin{proposition}[MLE for mixture models is minimization of an EOT problem]
\label{prop:general_result}
Consider a mixture model as in \Cref{def:discrete_mm} and $\Lcal$ the negative log-likelihood on $n$ i.i.d. samples. First, we have the identity
\begin{equation}
\label{eq:likelihood_almost_final}
\begin{split}
\forall (\pibf, \thetabf) \in \simplex_K \times \Theta, \ \frac{1}{n} \Lcal(\pibf, \thetabf) &= \min_{\Pbf \in U_{K}(\frac{\one_n}{n})} \langle \Cbf(\thetabf), \Pbf \rangle + \KL(\Pbf| \frac{\one_n}{n} \pibf^\top)\,,
\end{split}
\end{equation}
where the cost matrix is $\Cbf(\thetabf)  \triangleq \left(-\log P_{X|Y}(\xbf_i|j, \thetabf)\right)_{ij}$.
Second, the EOT problem is a upper-bound of the negative log-likelihood:
\begin{equation}
\forall (\pibf, \thetabf) \in \simplex_K \times \Theta, \ \frac{1}{n} \Lcal(\pibf, \thetabf) \leq \OT_{\varepsilon=1}(\frac{\one_n}{n}, \pibf, \Cbf(\thetabf))\,.
\end{equation}
Third, minimizing the negative log-likelihood with respect to the parameters is equivalent to minimizing over the parameters an EOT problem. Precisely,
\begin{equation}
\min_{(\pibf, \thetabf) \in \simplex_K \times \Theta} \frac{1}{n} \Lcal(\pibf, \thetabf) = \min_{(\pibf, \thetabf) \in \simplex_K \times \Theta} \OT_{\varepsilon=1}(\frac{\one_n}{n}, \pibf, \Cbf(\thetabf)) \,.
\end{equation}

\end{proposition}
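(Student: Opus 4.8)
The plan is to assemble the three assertions from the chain of identities already derived above: no ingredient beyond \Cref{lemma:dual_kl}, the decomposition \eqref{eq:kl_div_separation}, and careful bookkeeping is needed. For the first identity \eqref{eq:likelihood_almost_final} I would start from the reparametrized negative log-likelihood on the last line of \eqref{eq:first_shot_likelihood} and apply \Cref{lemma:dual_kl} termwise, with $h_j = \log P_{X|Y}(\xbf_i|j,\thetabf)$ and the given positive $\pi_j$. Each summand $-\log(\sum_j \pi_j e^{h_j})$ becomes a minimum over $\pbf^{(i)}\in\simplex_K$, and since the $n$ minimizations decouple they merge into a single minimization over the tuple $(\pbf^{(1)},\dots,\pbf^{(n)})$. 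Identifying such a tuple with the matrix $\Pbf = \frac1n(\pbf^{(1)},\dots,\pbf^{(n)})^\top$ is a bijection onto $U_K(\frac{\one_n}{n})$ --- the constraint $\Pbf\one_K = \frac{\one_n}{n}$ says precisely that each row of $\Pbf$ is $\frac1n$ times a simplex vector --- and after rescaling the linear and entropic terms by $1/n$, so that the per-row KL terms combine into $\KL(\Pbf|\frac{\one_n}{n}\pibf^\top)$, one obtains \eqref{eq:likelihood_almost_final}.

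For the second assertion I would rewrite the objective in \eqref{eq:likelihood_almost_final} using the decomposition \eqref{eq:kl_div_separation}, yielding \eqref{eq:likelihood_final_form}, and then test it against a single feasible point: the optimizer $\Pbf^\star$ of $\OT_{\varepsilon=1}(\frac{\one_n}{n},\pibf,\Cbf(\thetabf))$. Since $\Pbf^\star\in U(\frac{\one_n}{n},\pibf)\subseteq U_K(\frac{\one_n}{n})$ it is admissible for \eqref{eq:likelihood_final_form}, and ${\Pbf^\star}^\top\one_n=\pibf$ makes the term $\KL({\Pbf^\star}^\top\one_n|\pibf)$ vanish, so the two remaining terms are exactly the EOT objective evaluated at $\Pbf^\star$. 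Suboptimality in \eqref{eq:likelihood_final_form} then gives $\frac1n\Lcal(\pibf,\thetabf)\le \OT_{\varepsilon=1}(\frac{\one_n}{n},\pibf,\Cbf(\thetabf))$.

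For the third assertion I would minimize \eqref{eq:likelihood_final_form} over $\pibf\in\simplex_K$ first. Because $\min_{\bbf\in\simplex_K}\KL(\abf|\bbf)=0$, attained at $\bbf=\abf$, for every fixed admissible $\Pbf$ the term $\KL(\Pbf^\top\one_n|\pibf)$ can be driven to $0$ by choosing $\pibf=\Pbf^\top\one_n$; hence $\min_\pibf \frac1n\Lcal(\pibf,\thetabf)$ equals the semi-relaxed problem \eqref{eq:likelihood_final_form} with that last KL term deleted. It then remains to identify this with $\min_\pibf \OT_{\varepsilon=1}(\frac{\one_n}{n},\pibf,\Cbf(\thetabf))$: from an optimizer $\Pbf_0$ of the reduced semi-relaxed problem, put $\bbf_0=\Pbf_0^\top\one_n\in\simplex_K$, so $\Pbf_0\in U(\frac{\one_n}{n},\bbf_0)$ and the two objectives agree at $\Pbf_0$, giving ``$\ge$'', while conversely any optimizer of the EOT side is semi-relaxed admissible, giving ``$\le$''. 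Taking $\min$ over $\thetabf\in\Theta$ on both sides then produces the final display.

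The only genuinely delicate point is this last step: one has to commute the inner minimization over transport plans with the outer minimization over $\pibf$, and verify that once the second marginal is free the semi-relaxed feasible set $U_K(\frac{\one_n}{n})$ behaves like the union over $\pibf$ of the coupling sets $U(\frac{\one_n}{n},\pibf)$ (so that the marginal term is indeed ``redundant''). Everything else is routine, the only residual care being the $0\log 0$ and support conventions flagged in the Notations --- which is also why $\pi_j>0$ is assumed in \Cref{lemma:dual_kl}.
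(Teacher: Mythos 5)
Your proposal is correct and follows essentially the same route as the paper: Lemma~\ref{lemma:dual_kl} applied termwise plus the row-wise identification with $U_K(\frac{\one_n}{n})$ for the first identity, the decomposition \eqref{eq:kl_div_separation} tested against the EOT optimizer for the upper bound, and the elimination of the marginal KL term by optimizing $\pibf = \Pbf^\top\one_n$ for the final equality. The ``delicate point'' you flag about the second marginal being redundant is exactly the $\star$ step the paper justifies by the same two-sided argument you sketch.
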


\section{Illustration with Gaussian Mixture Models}

To illustrate the previous results we will show that the updates of the Expectation–Maximization algorithm (EM) for the Gaussian mixture model (GMM) can be interpreted as a block-coordinate descent (BCD) on the EOT loss (see e.g. \citealt[Section 11.4]{murphy2012machine} for a description of the EM algorithm).
According to \Cref{prop:general_result} minimizing the negative log-likelihood is equivalent to solving
\begin{equation}
\label{eq:optim}
\min_{(\pibf, \thetabf) \in \simplex_K \times \Theta} \min_{\Pbf \in U_{K}(\frac{\one_n}{n})} \langle \Cbf(\thetabf), \Pbf \rangle + \KL(\Pbf| \frac{\one_n}{n} \pibf^\top)\,.
\end{equation}
The BCD strategy consists in alternating between minimizing \eqref{eq:optim} in $\pibf, \thetabf, \Pbf$ while keeping the other variables fixed.

The update of $\Pbf$ with $\pibf, \thetabf$ fixed consists in solving a semi-relaxed entropic OT problem. As described e.g. in \citet{flamary2016ost} (or also in the proof of \Cref{prop:general_result}) the problem decouples with respect to the rows of $\Pbf$ and the solution is given by
\begin{equation}
\label{eq:update_P}
\forall (i,j) \in \integ{n} \times \integ{K}, \ P_{ij} = \frac{1}{n} \frac{\pi_j\exp(-C_{ij}(\thetabf))}{\sum_{k=1}^{K}\pi_k \exp(-C_{ik}(\thetabf))} = \frac{1}{n}\frac{\pi_j P_{X|Y}(\xbf|j, \thetabf)}{\sum_{k=1}^{K}\pi_k P_{X|Y}(\xbf|k, \thetabf)}\,.
\end{equation}
This step actually corresponds to finding the conditional distribution in the ‘‘E step'' of the EM algorithm.
The update for $\pibf$, with $\Pbf$ held fixed, can be found using \eqref{eq:kl_div_separation} and is simply
\begin{equation}
\label{update_pi}
\pibf = \Pbf^\top \one_n \text{ i.e. } \forall j \in \integ{K}, \ \pi_j = \sum_{i=1}^{n} P_{ij}\,.
\end{equation}
Finally we derive the update of $\thetabf$ in the GMM case.
We consider
\begin{equation*}
\label{eq:pgmm}
P_{X|Y}(\xbf|j, \thetabf) = (2\pi)^{-d/2} \det \Sigmabf^{-1/2}\exp\left(-\frac{1}{2}(\xbf-\mubf_j)^\top \Sigmabf^{-1}(\xbf-\mubf_j)\right)\,,
\end{equation*}
and the goal is to update $\thetabf = (\mubf_1, \cdots, \mubf_K, \Sigmabf)$ with $\mubf_j \in \R^{d}$ and $\Sigmabf \succ 0$. With other variables fixed this boils down to solving
\begin{equation*}
\min_{(\mubf_1, \cdots, \mubf_K, \Sigmabf)} \ \frac{1}{2}\sum_{ij} (\xbf_i-\mubf_j)^\top \Sigmabf^{-1}(\xbf_i-\mubf_j)P_{ij} + \frac{n}{2} \log \det \Sigmabf\,.
\end{equation*}
Setting the gradient of this loss to zero, one can show that the update of $\Sigmabf$ (with $(\mubf_1, \cdots, \mubf_K)$ fixed) reads
\begin{equation}
\label{eq:update_cov}
\Sigmabf = \frac{1}{n} \sum_{ij} P_{ij} (\xbf_i-\mubf_j)(\xbf_i-\mubf_j)^\top\,,
\end{equation}
and the update of the means (with $\Sigmabf$ fixed) are
\begin{equation}
\label{eq:update_means}
\forall j \in \integ{K}, \ \mubf_j = \frac{1}{\sum_{i}P_{ij}} \sum_{i} P_{ij} \xbf_i\,.
\end{equation}
These updates exactly corresponds to the updates of the EM algorithm apply to a GMM: first update $\Pbf$ according to \eqref{eq:update_P} which is the ‘‘E step'', then the proportion of the classes with \eqref{update_pi} and the means and covariance with \eqref{eq:update_cov} and \eqref{eq:update_means}, which is the ‘‘M step''.

\section{Discussions}

To finish we make a few comments. First, the proof described above can be easily  generalized to infinite mixtures, with appropriate assumptions on $P_Y$, see e.g. \citet[Definition 2]{rigollet2018entropic}. With these assumptions we would obtain that minimizing the negative log-likelihood is equivalent to solving a problem of the form $\inf_{(P_Y, \thetabf)} \OT_{\varepsilon=1}(P_Y, \frac{1}{n} \sum_{i=1}^{n} \delta_{\xbf_i}; \thetabf)$ where the cost of the OT is $c(\xbf, y; \thetabf) = -\log (P_{X|Y}(\xbf|y, \thetabf))$.

Also, as discussed in \citet{mena2020sinkhorn}, the same relations between negative log-likelihood and entropic OT can be obtained for more general generative models where $P_Y$ are $P_X$ are coupled via a joint distribution $Q_{X,Y}^{\thetabf^\star}$ (such as $\dr Q_{X,Y}^{\thetabf^\star}(\xbf, y) = \exp(-g_{\thetabf^\star}(\xbf,y)) \dr P_X(\xbf) \dr P_{Y}(y)$). This setting encompasses the GMM case and the principle of the proof remains similar to the one described here.

\section{Postponed proofs \label{sec:postponed_proof}}
\dualkl*
\begin{proof}
The optimization problem above is a maximization of a strictly concave function. Consider the Lagrangian $L(\pbf, \lambda) = \sum_{j} h_j p_j - \sum_{j} p_j \log(\frac{p_j}{\pi_j}) + \lambda (\sum_j p_j- 1)$. Then $\partial_{p_k} L(\pbf, \lambda) =  h_k - \log(p_k/\pi_k) - 1 + \lambda$ thus $\partial_{p_k} L(\pbf, \lambda) = 0 \iff p_{k} = \exp(\lambda -1) \pi_k \exp(h_k)$. By primal constraints $\sum_{j} p_j = 1 \implies \exp(\lambda-1)= \frac{1}{\sum_j \exp(h_j)\pi_j}$. This gives the desired optimal solution. Also $\forall k \in \integ{K}, \log(p_k/\pi_k) = h_k - \log(\sum_{j=1}^{K}\pi_j\exp(h_j))$ hence $\sum_j p_j \log(p_j/\pi_j) = \sum_j h_j p_j - \log(\sum_{j=1}^{K}\pi_j\exp(h_j)) \sum_j p_j$. Using that $\sum_j p_j = 1$ gives the result.
\end{proof}
We also used the following result about the KL divergence.
\begin{lemma}
\label{lemma:kl_terms}
Let $\Pbf \in \R_+^{n \times K}$ be a matrix and $\abf \in \R^{n}_+, \bbf \in \R^{K}_{+}$ then
\begin{equation}
\KL(\Pbf|\abf \bbf^\top) = \KL\left(\Pbf|(\Pbf \one_K)(\Pbf^\top \one_n)^\top\right) + \KL(\Pbf\one_K|\abf) + \KL(\Pbf^\top \one_n | \bbf)\,.
\end{equation}
\end{lemma}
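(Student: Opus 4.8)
The plan is to prove Lemma~\ref{lemma:kl_terms} by a direct expansion of each Kullback--Leibler term into an entropy-like piece plus cross-entropy-like pieces, and then to observe that everything cancels. The only structural fact needed is the elementary remark that a quantity depending on a single index can be summed against $\Pbf$ through the corresponding marginal: writing $r_i \triangleq \sum_{j} P_{ij}$ (so that $(r_i)_i = \Pbf\one_K$) and $c_j \triangleq \sum_i P_{ij}$ (so that $(c_j)_j = \Pbf^\top\one_n$), one has $\sum_{i,j} P_{ij} f(i) = \sum_i r_i f(i)$ and $\sum_{i,j} P_{ij} g(j) = \sum_j c_j g(j)$ for any $f,g$.

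Concretely I would expand the left-hand side as
$$\KL(\Pbf|\abf\bbf^\top) = \sum_{i,j} P_{ij}\log P_{ij} - \sum_i r_i \log a_i - \sum_j c_j \log b_j,$$
using the marginal identity on the last two sums, and likewise
$$\KL\!\left(\Pbf|(\Pbf\one_K)(\Pbf^\top\one_n)^\top\right) = \sum_{i,j} P_{ij}\log P_{ij} - \sum_i r_i \log r_i - \sum_j c_j \log c_j.$$
Writing out $\KL(\Pbf\one_K|\abf) = \sum_i r_i\log r_i - \sum_i r_i\log a_i$ and $\KL(\Pbf^\top\one_n|\bbf) = \sum_j c_j\log c_j - \sum_j c_j\log b_j$ and summing the three right-hand-side quantities, the $\sum_i r_i\log r_i$ and $\sum_j c_j\log c_j$ terms cancel, and what remains is precisely the expansion of $\KL(\Pbf|\abf\bbf^\top)$ displayed above, which yields the claimed identity. (Taking $\abf=\tfrac1n\one_n$, for which $\Pbf\one_K=\abf$ whenever $\Pbf\in U_K(\tfrac1n\one_n)$ so that the middle term vanishes, recovers~\eqref{eq:kl_div_separation}.)

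The argument involves no genuine obstacle; the content is bookkeeping. The only point requiring care — and which I would treat lightly, in line with the conventions announced in the Notations paragraph — is the edge cases: if some $a_i$ or $b_j$ is zero then finiteness of $\KL(\Pbf|\abf\bbf^\top)$ forces the corresponding row or column of $\Pbf$ to vanish, and with the conventions $0\log 0 = 0$ and $0\log\tfrac00 = 0$ every sum above is well defined and the cancellation holds termwise. A scrupulous version would simply restrict all sums to the support of $\Pbf$ from the outset.
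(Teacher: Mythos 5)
Your proof is correct and is essentially the same computation as the paper's: both reduce to splitting the logarithm via $\log\frac{P_{ij}}{a_i b_j} = \log\frac{P_{ij}}{r_i c_j} + \log\frac{r_i}{a_i} + \log\frac{c_j}{b_j}$ and summing the single-index pieces through the marginals, the only difference being that you expand all four KL terms into entropy/cross-entropy pieces and cancel, while the paper inserts the factor $\frac{r_i c_j}{r_i c_j}$ inside one logarithm. Your added care about the support and the $0\log 0$ conventions is a welcome touch the paper deliberately omits.
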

\begin{proof}
By definition
\begin{equation*}
\begin{split}
\KL(\Pbf|\abf \bbf^\top) &= \sum_{ij} P_{ij} \log(\frac{P_{ij}}{a_ib_j}) = \sum_{ij} P_{ij} \log(\frac{P_{ij}}{(\Pbf \one_k)_i(\Pbf^\top \one_n)_j}\frac{(\Pbf \one_k)_i(\Pbf^\top \one_n)_j}{a_i b_j}) \\
&= \KL\left(\Pbf|(\Pbf \one_K)(\Pbf^\top \one_n)^\top\right) + \sum_{ij} P_{ij} \log(\frac{(\Pbf \one_k)_i}{a_i}) + \sum_{ij} P_{ij} \log(\frac{(\Pbf^\top \one_n)_j}{b_j}) \\
&= \KL\left(\Pbf|(\Pbf \one_K)(\Pbf^\top \one_n)^\top\right) + \sum_{i} (\Pbf \one_k)_i \log(\frac{(\Pbf \one_k)_i}{a_i}) + \sum_{j} (\Pbf^\top \one_n)_j \log(\frac{(\Pbf^\top \one_n)_j}{b_j})\,.
\end{split}
\end{equation*}
\end{proof}

\bibliographystyle{unsrtnat}
\bibliography{references.bib}

\end{document}